\documentclass[letterpaper]{article} 
\usepackage{aaai25}  
\usepackage{times}  
\usepackage{helvet}  
\usepackage{courier}  
\usepackage[hyphens]{url}  
\usepackage{graphicx} 
\urlstyle{rm} 
\usepackage{natbib}  
\usepackage{caption} 
\frenchspacing  
\setlength{\pdfpagewidth}{8.5in}  
\setlength{\pdfpageheight}{11in}  
%
\usepackage{algorithm}
\usepackage{algorithmic}

%
\usepackage{newfloat}
\usepackage{listings}
\DeclareCaptionStyle{ruled}{labelfont=normalfont,labelsep=colon,strut=off} 
\lstset{%
	basicstyle={\footnotesize\ttfamily},
	numbers=left,numberstyle=\footnotesize,xleftmargin=2em,
	aboveskip=0pt,belowskip=0pt,%
	showstringspaces=false,tabsize=2,breaklines=true}
\floatstyle{ruled}
\newfloat{listing}{tb}{lst}{}
\floatname{listing}{Listing}
%
\pdfinfo{
/TemplateVersion (2025.1)
}

\usepackage{microtype}
\usepackage{booktabs} 
\usepackage{amsmath}
\usepackage{amssymb}
\usepackage{mathtools}
\usepackage{amsthm}
\usepackage[capitalize,noabbrev]{cleveref}

\theoremstyle{plain}
\newtheorem{theorem}{Theorem}[section]
\newtheorem{proposition}[theorem]{Proposition}
\newtheorem{lemma}[theorem]{Lemma}

\theoremstyle{definition}

\newtheorem{remark}[theorem]{Remark}
\newcommand{\MDP}{\mathcal{M}}
\newcommand{\States}{\mathcal{S}}
\newcommand{\Actions}{\mathcal{A}}
\newcommand{\Transitions}{\mathcal{T}}
\newcommand{\Rewards}{\mathcal{R}}
\newcommand{\dataset}{\mathcal{D}}
\newcommand{\demons}{\mathcal{D}}
\newcommand{\traj}{\tau}
 
\newcommand{\features}{f} 
\newcommand{\fev}{\bar{\features}}
\newcommand{\efev}{\tilde{\features}}
\newcommand{\qsoft}{Q^{\textit{soft}}}
\newcommand{\vsoft}{V^{\textit{soft}}}

\newcommand{\reals}{\mathbb{R}}
\DeclareMathOperator{\EX}{\mathbb{E}}
\newcommand{\prob}{P}

\newcommand{\loss}{\mathcal{L}}




\DeclareMathOperator*{\argmax}{arg\,max}


\setcounter{secnumdepth}{1} 

%


\title{Inverse Reinforcement Learning by Estimating Expertise of Demonstrators}
\author{
    Mark Beliaev\textsuperscript{\rm 1},
    Ramtin Pedarsani\textsuperscript{\rm 1}
}
\affiliations {
    \textsuperscript{\rm 1}University of California, Santa Barbara\\
    mbeliaev@ucsb.edu, ramtin@ucsb.edu
}

\begin{document}
\maketitle


\begin{abstract}
    In Imitation Learning (IL), utilizing suboptimal and heterogeneous demonstrations presents a substantial challenge due to the varied nature of real-world data. However, standard IL algorithms consider these datasets as homogeneous, thereby inheriting the deficiencies of suboptimal demonstrators. Previous approaches to this issue rely on impractical assumptions like high-quality data subsets, confidence rankings, or explicit environmental knowledge. This paper introduces IRLEED, \textit{Inverse Reinforcement Learning by Estimating Expertise of Demonstrators}, a novel framework that overcomes these hurdles without prior knowledge of demonstrator expertise. IRLEED enhances existing Inverse Reinforcement Learning (IRL) algorithms by combining a general model for demonstrator suboptimality to address reward bias and action variance, with a Maximum Entropy IRL framework to efficiently derive the optimal policy from diverse, suboptimal demonstrations. Experiments in both online and offline IL settings, with simulated and human-generated data, demonstrate IRLEED's adaptability and effectiveness, making it a versatile solution for learning from suboptimal demonstrations.
\end{abstract}


\begin{links}
    \link{Code}{https://github.com/mbeliaev1/IRLEED}
    \link{Extended version}{https://arxiv.org/abs/2402.01886}
\end{links}



	\begin{figure*}[!t]
		\centering
        \includegraphics[width=0.8\textwidth]{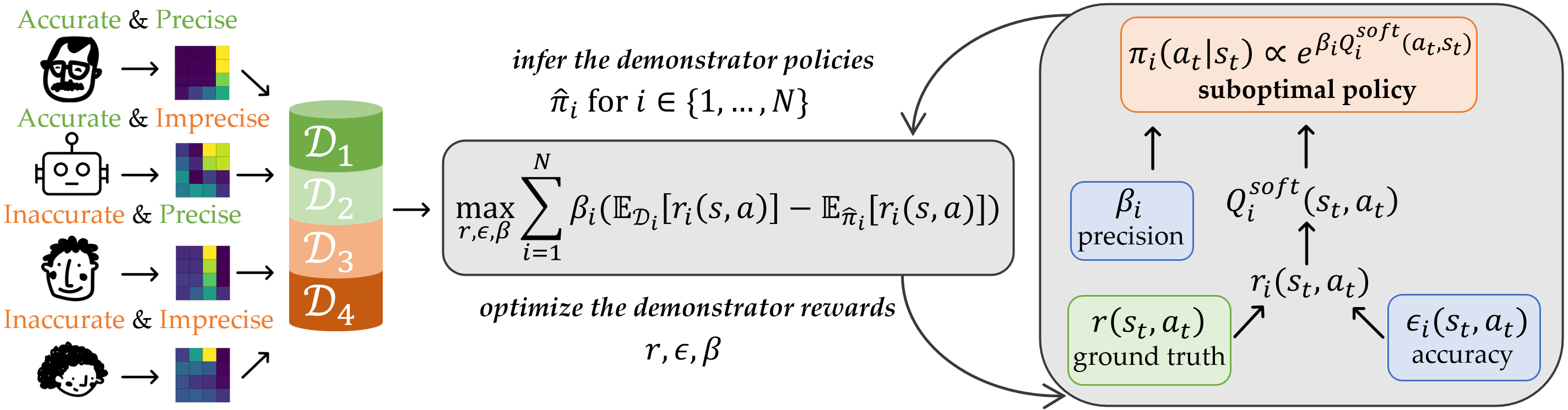}
		\caption{IRLEED is applied in the suboptimal setting to estimate the ground truth reward $r$, which is used to find the optimal policy. \textit{Left}: A heterogeneous dataset is collected from multiple sources with varying optimality. We categorize this optimality by using accuracy to represent the reward bias, and precision to represent the variance in action choices. \textit{Right}: We infer the demonstrator policies using a model for behavior based on the Boltzmann rationality principle, which captures both the accuracy $\epsilon_i$, and the precision $\beta_i$, as compared to the ground truth reward $r$. \textit{Middle}: Using estimates of the demonstrator policies $\hat{\pi}_i$ along with the demonstrations $\demons_i$, we can optimize for the true reward $r$, and the parameters that capture accuracy $\epsilon$ and precision $\beta$.} 
		\label{fig:cover}
	\end{figure*}
 
	\section{Introduction}\label{sec: Introduction}	
	
	Reinforcement Learning (RL) has proven to be a powerful tool across a wide range of applications, from controlling robotic systems, to playing complex games like Go and Chess. The efficacy of RL algorithms is largely attributed to their ability to optimize hand-crafted reward functions~\cite{openai_gym}. However, designing these functions is often challenging and impractical in complex environments~\cite{hadfield2017inverse}. A prevalent alternative to this dilemma is Imitation Learning (IL), which bypasses the need for cumbersome reward engineering by leveraging expert demonstrations to instill desired behaviors~\cite{argall2009survey}.\par
	
	The two main approaches utilized in IL are: behavioral cloning~\cite{pomerleau1991efficient}, which acquires a policy through a supervised learning approach using state-action pairs provided by the expert, and Inverse Reinforcement Learning (IRL)~\cite{ng2000algorithms}, which solves for the reward function that makes the expert's behavior optimal, subsequently facilitating the training of an IL policy. Despite the effectiveness of behavior cloning in simple environments with large amounts of data, learning a policy to fit single time step decisions leads to compounding errors due to covariance shift~\cite{ross2010efficient}. On the other hand, IRL learns a reward function that prioritizes entire trajectories over others, considering the sequential nature of the decision making problem at hand~\cite{abbeel_apprenticeship_2004}. Consequently, the success of IRL has spurred the development of IL techniques that either explicitly or implicitly incorporate environment dynamics~\cite{ho_generative_2016,fu_learning_2018,kostrikov2019imitation,garg_iq_learn_2022}.\par
	
	However, a critical assumption in these methodologies is the availability of high-quality demonstration data. In many practical situations, especially with crowd-sourced or varied data, the quality of demonstrations is inconsistent~\cite{robomimic2021,belkhale2023data}. For robotics, data curation becomes vital, since utilizing smaller real world datasets with noise or biases can lead to hazardous situations. Furthermore, assuming uniformity in demonstration quality overlooks the unique intentions of individual demonstrators, potentially leading to suboptimal learning outcomes~\cite{eysenbach2018diversity}. Therefore, it is critical to develop IL methods that can account for the heterogeneity and suboptimality of demonstration data. \par
	
    \noindent\textbf{Related Work}\quad Addressing the challenges posed by learning from suboptimal and heterogeneous demonstrations is complex. One line of research focuses on heterogeneous demonstrations without considering their quality~\cite{chen2020joint,chen2023fast}, assuming that all demonstrations are optimal. Other works consider suboptimal demonstrations, but require unrealistic prerequisites such as explicit knowledge of environment dynamics~\cite{brown2020better,chen_learning_2020}, a set of confidence scores over the demonstrations~\cite{zhang2021confidence,cao2021learning,wu2019imitation,pmlr-v229-kuhar23a}, or direct access to a subset of expert demonstrations~\cite{shiarlis2016inverse,xu2022discriminator,yang2021trail}.\par
    
    A recent approach that avoids these assumptions is ILEED~\cite{beliaev_imitation_2022}, which directly models the expertise of demonstrators within a supervised learning framework, thus enabling one to learn from the optimal behavior while filtering out the suboptimal behavior of each demonstrator. While this method was proven effective, it is based on a behavioral cloning formulation, which overlooks the underlying environment dynamics, and its representation of suboptimal behavior is limited, presuming demonstrators to be noisy approximations of experts.\par 
	
    \noindent\textbf{Overview}\quad To overcome these challenges, this paper introduces IRLEED, \textit{Inverse Reinforcement Learning by Estimating Expertise of Demonstrators}. As illustrated in \cref{fig:cover}, IRLEED is a novel framework for IRL that accounts for demonstrator suboptimality without prior expertise knowledge. It comprises two core components: (1) a general model of demonstrator suboptimality based on the Boltzmann rationality principle, which captures both the reward bias and the variance in action choices of each demonstrator as compared to the optimal policy (2) a Maximum Entropy IRL framework, which is used to efficiently find the optimal policy when provided with a set of suboptimal and heterogeneous demonstrations. These elements enable IRLEED to effectively recover the ground truth policy from suboptimal and heterogeneous demonstrations, surpassing the limitations of previous models. Furthermore, IRLEED's simplicity and compatibility with existing IRL techniques make it a versatile and powerful tool in the field of IL.\par

	The main contributions of our work are:
	\begin{itemize}
	\itemsep0em
        \item We propose a novel framework designed to enhance existing IRL algorithms, addressing the challenges of learning from suboptimal and heterogeneous data.
        \item We provide comparative insights against standard IRL and the behavior cloning approach, ILEED, showing how IRLEED generalizes both methods.
        \item We empirically validate the success of our method against relevant baselines in both online and offline IL scenarios.
	\end{itemize}
	
	\section{Preliminary}\label{Sec: Preliminary}
	\noindent\textbf{Notations}\quad 
	We use the Markov decision process (MDP) setting, defined by a tuple $\MDP=(\States,\Actions,\Transitions,p_0,r,\gamma)$, where $\States$, $\Actions$ represent state and action spaces, $\Transitions:\States\times\Actions\times\States\rightarrow[0,1]$ represents the dynamics, $p_0:\States\rightarrow[0,1]$ represents the initial state distribution, $r\in\Rewards:\States\times\Actions\rightarrow\reals$ represents the reward function, and $\gamma\in(0,1)$ represents the discount factor. In Sections  ~\ref{sec: Model} and~\ref{sec: IRLEED}, we describe our method using finite state and action spaces, $\States$ and $\Actions$, but our experiments later utilize continuous environments. We use $\pi:\States\times\Actions\rightarrow[0,1]$ to denote a policy that assigns probabilities to actions in $\Actions$ given states in $\States$. Considering the $\gamma$--discounted infinite time horizon setting, we define the expected value under a policy $\pi$ in terms of the trajectory it produces. Specifically, this expectation is represented as $\fev_\pi=\EX_\pi[\features(s,a)]$, which equals $\EX[\sum_{t=0}^{\infty}\gamma^t f(s_t,a_t)]$, where $s_0\sim p_0,a_t\sim\pi(\cdot|s_t)$, and $s_{t+1}\sim\Transitions(\cdot|s_t,a_t)$. Additionally, we will use $\efev_\demons=\EX_\demons[\features(s,a)]$ to denote the empirical expectation with respect to the trajectories $\traj_j=(s_0,a_0,\ldots,s_{T},a_{T})$ in demonstration set $\demons=\{\traj_j\}_{j=1}^{M}$, where we leave out denoting trajectories of varying lengths for simplicity.\par
    \noindent\textbf{Maximum Entropy IRL}\quad
    The goal of IRL is to recover a reward $r\in\Rewards$ that rationalizes the demonstrated behavior in dataset $\dataset$, where the provided dataset consists of trajectories $\traj$ sampled from an expert policy $\pi_E$. This idea was first formulated as a feature-expectation matching problem~\cite{abbeel_apprenticeship_2004}: assuming that the feature vector $\features:\States\times\Actions\rightarrow\reals^k$ components quantify the expert's behavior, to find a policy $\pi$ that performs equal to or better than the expert $\pi_E$, it suffices that their feature expectations match $\fev_\pi=\fev_{\pi_E}$. Unfortunately, this leads to an ill-posed problem as many policies can lead to the same feature counts. Maximum Entropy IRL resolves this ambiguity by choosing the policy which does not exhibit any additional preferences beyond matching feature expectations~\cite{ziebart2008maximum}:
	\begin{equation}\label{eq: IRL_objective}
		\argmax_{\pi} H(\pi),\textit{ such that: }\fev_\pi=\efev_{\demons},
	\end{equation}
	where in the infinite-time horizon setting, this equates to maximizing the discounted causal entropy $H(\pi)$ under the feature matching constraint~\cite{bloem2014infinite}.\par 

	In the MDP setting, the above problem is equivalent to finding the maximum likelihood estimate of $\theta$:
	\begin{equation}\label{eq: IRL_ML}
		\hat{\theta} = \argmax_{\theta} \log\prod_{\tau_j\in\demons}p_0(s_0)\prod_{t=0}^{T}\pi_\theta(a_t|s_t)\Transitions(s_{t+1}|s_{t},a_{t}),
	\end{equation}
	with the parameterized policy $\pi_\theta$ defined recursively using $\qsoft$ and $\vsoft$:
	\begin{align}
		\pi_{\theta}(a_t|s_t)&=\exp(\qsoft_\theta(s_t,a_t)-\vsoft_\theta(s_t)),\label{eq: IRL_pi}\\
		\qsoft_\theta&=\theta^\top\features(s_t,a_t)+\gamma\EX_{s_{t+1}\sim\Transitions(\cdot|s_t,a_t)}[\vsoft_\theta(s_{t+1})],\label{eq: q_soft}\\
		\vsoft_\theta&=\log\sum_{a_t\in\Actions}\exp(\qsoft_\theta(s_t,a_t)).\label{eq: v_soft}
	\end{align}
 
	The parameters $\theta\in\reals^k$ specifying the soft Bellman policy, $\pi_{\theta}(a_t|s_t)$, correspond to the dual variables of the feature matching constraint in Eq.~\eqref{eq: IRL_objective}. Without loss of generality, if we define the true reward signal as $r(s,a)=\theta^{\star\top}\features(s,a)$ for some reward parameter ${\theta}^\star\in\reals^k$ and features $\features$, then the soft Bellman policy parameterized by $\theta^\star$ achieves the maximum possible return (for more details see~\cite{ziebart2010thesis}, Theorem 6.2 and Corollary 6.11).\footnote{To allow an arbitrary reward specification, the IRL objective defined in Eq.~\ref{eq: IRL_objective} can be expressed in terms of occupancy measures instead of feature expectations~\cite{ho_generative_2016}.} This formulates an iterative approach to finding the true reward parameter $\theta^\star$: given demonstration set $\demons$ and your estimate $\theta$, alternate between \textit{inferring} the policy $\pi_{\theta}(a_t|s_t)$ based on Eq.~\eqref{eq: IRL_pi}, and \textit{optimizing} the parameter $\theta$ based on Eq.~\eqref{eq: IRL_ML}.
		
	This line of work provides many insights that have been used to derive modern IRL~\cite{fu_learning_2018,reddy_sqil_2019,gao2018reinforcement,garg_iq_learn_2022} and RL~\cite{haarnoja_reinforcement_2017,haarnoja_soft_2018} algorithms. Unlike these prior works, we consider the setting where we no longer have access to the expert policy $\pi_E$, and instead are given a heterogeneous set of potentially suboptimal demonstrations from which we want to learn a well performing policy. To this end, we first describe the model we use to capture the suboptimality of demonstrators in \cref{sec: Model}, following which we frame this problem within the IRL framework and derive our own method IRLEED in \cref{sec: IRLEED}. Finally, we describe our experimental results in \cref{sec: Experiments}, and conclude our work in \cref{sec: Discussion}.
	
	\section{Suboptimal Demonstrator Model} \label{sec: Model}
	This work considers the case where dataset $\demons = \{ (i, \demons_i) \}_{i=1}^{N}$ consists of a mixture of $N$ demonstrations with varying quality. Each demonstration $\demons_i=\{\traj_j\}_{j=1}^{M_i}$ contains a set of $M_i$ trajectories sampled from a distinct,  fixed policy $\pi_i$. Rather than considering $\demons$ as a homogeneous set derived from one expert policy $\pi_E$, our approach focuses on leveraging the specific source of each demonstration to enhance the IRL framework. By identifying the varying qualities within this mixed collection of demonstrations, we aim to more accurately deduce the ground truth policy $\pi_E$, as opposed to a simplistic method that averages behaviors. To achieve this, it is essential to develop a model for the demonstrators $\pi_i$ that accurately reflects their suboptimal behavior relative to the optimal policy $\pi_E$.\par
	A popular framework for modeling decision making is the Boltzmann rationality model, which is widely used in psychology~\cite{baker2009action}, economics~\cite{luce1959individual} and RL~\cite{laidlaw2022boltzmann}. In this model, the likelihood of an agent choosing an action is proportional to the exponential of the reward associated with that action: $\pi(a|s;r,\beta)\propto\exp(\beta r(s,a))$, where the inverse temperature parameter $\beta$ controls the randomness of the choice. While this formulation provides an intuitive way to capture one form of suboptimality through $\beta$, it traditionally assumes that agents have unbiased knowledge about the true reward $r(s,a)$. To broaden this model for suboptimal demonstrations, we propose considering agents that follow a biased reward $r(s,a)+\epsilon(s,a)$, introducing $\epsilon$ as the deviation from the true reward. This modification enables us to represent an agent's \textit{accuracy} through the deviation $\epsilon$, adding a bias to their perceived rewards, and their \textit{precision} through parameter $\beta$, adding variance to their action choices.\par
	
	Within the context of the IRL framework described in \cref{Sec: Preliminary}, we recall that the expert policy $\pi_E$ aligns with the true reward, formulated as $r(s,a)=\theta^{\star\top}\features(s,a)$. Applying the aforementioned concept, we can model demonstrator $i$'s perceived reward as a deviation from the true reward: $r_i(s,a)=(\theta^\star+\epsilon_i)^\top\features(s,a)$, where $\epsilon_i\in\reals^k$. By incorporating this altered reward $r_i$ into the \textit{soft} Bellman policy defined in Eq.~\eqref{eq: IRL_pi}, we derive a recursive parameterization for demonstrator $i$'s policy:
	\begin{equation}\label{eq: stochastic policy}
        \pi_{\theta^\star,\epsilon_i,\beta_i}(a_t|s_t)\propto\exp(\beta_i(\qsoft_{\theta^\star+\epsilon_i}(s_t,a_t))),
	\end{equation}
	where $\beta_i\in[0,\infty)$, and the normalizing factor not shown above should be modified to account for $\beta_i$: $\vsoft_{\theta^\star+\epsilon_i}(s_t)=\frac{1}{\beta_i}\log\sum_{a_t\in\Actions}\exp(\beta_i\qsoft_{\theta^\star+\epsilon_i}(s_t,a_t))$.\par
	
	Our model captures two distinct aspects of suboptimal behavior: (1) $\epsilon_i$ quantifies the demonstrator's \textit{accuracy} in estimating the true reward $r$, where $\epsilon_i^\top\features(\tau)$ represents the estimation error. (2) $\beta_i$ quantifies the demonstrator's \textit{precision} in action selection, where $\beta_i\rightarrow\infty$ interpolates from the soft Bellman policy, which samples actions according to their Q--values, to the standard Bellman policy, which chooses actions that maximize the Q--value. This model offers a versatile and mathematically convenient way to depict suboptimal demonstrator behaviors within the soft Bellman policy domain, as well as extend the results of Maximum Causal Entropy IRL to the suboptimal setting.\par
    \begin{remark}\label{rem: general model}
        \textit{It is important to note that this model extends outside the scope of problems where the true reward $r$ is an affine transformation of some feature vector $\features$. In the general setting, we can parameterize demonstrator $i$'s reward as $r_i=r+\epsilon_i$, a combination of the true reward $r\in\mathcal{R}$ and their deviation $\epsilon_i\in\mathcal{R}$, where $r$ and $\epsilon_i$ can be represented by neural networks to allow for general function approximation. We opt to formulate our proposed model within the classic feature matching scope, as this facilitates direct comparison with the seminal works used to derive our results in the following section. However, our proposed model's applicability extends beyond this traditional framework, as we elaborate in \cref{sec: Practical Algorithm}, and demonstrate through experiments in \cref{sec: Experiments}. Until then, we utilize the feature matching framework described, referencing it as standard IRL.}\par
    \end{remark}

	\section{Learning from Suboptimal Data with IRL}\label{sec: IRLEED}
	Up to this point, we have formulated a model for each demonstrator $i$ based on the parameter $\theta^\star$, which defines the true reward, and parameters $\epsilon_i$ and $\beta_i$, which characterize demonstrator suboptimalities. However, we do not have access to these parameters, and are instead provided with a dataset of demonstrations $\demons$. Recall that standard IRL assumes that all demonstrations come from the optimal policy $\pi_E$, and follows the iterative approach described in \cref{Sec: Preliminary} to find it. However, applying this approach to the heterogeneous regime will yield an averaged policy, leading to subpar performance if a significant portion of the demonstrations are suboptimal. To refine this, we introduce IRLEED, an innovative extension of the IRL framework tailored for the suboptimal setting. 
 
	\subsection{IRLEED}\label{sec: General Framework}
    Utilizing the dataset $\demons=\{(i,\demons_i)\}_{i=1}^{N}$, we can jointly estimate the maximum likelihood parameters:
	\begin{equation}\label{eq: IRLEED Loss}
		\loss(\theta,\epsilon,\beta) =  \log\prod_{\demons_i\in\demons}\prod_{\tau_j\in\demons_i}\prob(\traj_j|\theta,\epsilon_i,\beta_i),
	\end{equation}
	where $\epsilon=\{\epsilon_i\}_{i=1}^{N}$, $\beta=\{\beta_i\}_{i=1}^{N}$, and the probability $\prob(\traj_j|\theta,\epsilon_i,\beta_i)$ is given by: $\prob(\traj|\theta,\epsilon_i,\beta_i)=p_0(s_0)\prod_{t=0}^{T}\pi_{\theta,\epsilon_i,\beta_i}(a_t|s_t)\Transitions(s_{t+1}|s_{t},a_{t})$.
	Under the specified demonstrator model, this likelihood serves as a suitable loss function. Specifically, rewriting the joint likelihood equivalently only over $\theta$: $\loss(\theta) = \max_{\epsilon,\beta}\loss(\theta,\epsilon,\beta)$,
	it is easy to show that $\loss(\theta)$ is a proper objective function. 
	\begin{proposition}\label{prop: loss minimizer}
		$\theta^\star$ is the (non--unique) maximizer of $\loss(\theta)$.
	\end{proposition}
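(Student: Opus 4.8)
The plan is to exploit a shift symmetry of the demonstrator model. By Eq.~\eqref{eq: stochastic policy}, the policy $\pi_{\theta,\epsilon_i,\beta_i}$ — and hence every factor $\pi_{\theta,\epsilon_i,\beta_i}(a_t|s_t)$ in $\prob(\traj_j|\theta,\epsilon_i,\beta_i)$ — depends on $\theta$ and $\epsilon_i$ only through their sum $\theta+\epsilon_i$ (together with $\beta_i$), since $\qsoft_{\theta+\epsilon_i}$ and the associated $\beta_i$-normalization are functions of $\theta+\epsilon_i$ alone, while the factors $p_0(s_0)$ and $\Transitions(s_{t+1}|s_t,a_t)$ carry no parameter dependence. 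First I would record this as an observation: for any $\theta,\theta'\in\reals^k$, any $i$, and any $\beta_i$, if $\theta+\epsilon_i=\theta'+\epsilon_i'$ then $\prob(\traj|\theta,\epsilon_i,\beta_i)=\prob(\traj|\theta',\epsilon_i',\beta_i)$ for every $\traj$, so $\loss(\theta,\epsilon,\beta)=\loss(\theta',\epsilon',\beta)$ whenever $\epsilon_i'=\epsilon_i+\theta-\theta'$ for all $i$.

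Next, to show $\theta^\star\in\argmax_\theta\loss(\theta)$, fix an arbitrary $\theta\in\reals^k$. Because $\epsilon_i$ ranges over all of $\reals^k$, the map $(\epsilon_i,\beta_i)\mapsto(\theta+\epsilon_i,\beta_i)$ is a bijection of the feasible set $\reals^k\times[0,\infty)$ onto itself, and this bijection is the \emph{same} set for every $\theta$; composing with the observation above, the inner optimization $\max_{\epsilon,\beta}\loss(\theta,\epsilon,\beta)$ is an optimization of a fixed function over a fixed domain that does not see $\theta$. Hence $\loss(\theta)=\loss(\theta^\star)$. Concretely, if $(\epsilon,\beta)$ attains (or nearly attains) the maximum defining $\loss(\theta)$, then setting $\epsilon_i'=\epsilon_i+\theta-\theta^\star$ gives $\theta^\star+\epsilon_i'=\theta+\epsilon_i$, so $\loss(\theta^\star)\ge\loss(\theta^\star,\epsilon',\beta)=\loss(\theta,\epsilon,\beta)=\loss(\theta)$, and running the same shift in reverse gives the opposite inequality. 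Since $\theta$ was arbitrary, $\loss$ is constant on $\reals^k$; in particular $\theta^\star$ is a maximizer, and it is non-unique because every parameter vector is.

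There is no substantial obstacle: the proposition simply says that the scale/shift degeneracy of the reward is absorbed, in this model, entirely into the per-demonstrator bias terms $\epsilon_i$. The only points needing care are (i) being explicit that the non-parametric factors $p_0$ and $\Transitions$ factor out of the likelihood and play no role, and (ii) handling $\max$ versus $\sup$ — which is immediate from the bijection, since it identifies the two inner optimization problems exactly rather than comparing a single point. I would also remark that this result uses no assumption on how $\demons$ was generated — it holds for arbitrary data — and that the realizability $r=\theta^{\star\top}\features$ of \cref{sec: Model} enters only in giving $\theta^\star$ its meaning as the true parameter.
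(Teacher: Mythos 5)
Your argument is logically sound and does establish the literal claim, but it is worth being clear about what it does and does not show relative to what the paper seems to intend. The paper offers no explicit proof of \cref{prop: loss minimizer}; the natural reading of ``$\loss(\theta)$ is a proper objective function'' is a consistency-style argument: because the data in $\demons_i$ are generated by policies of the form \eqref{eq: stochastic policy} with shared parameter $\theta^\star$ and some true $(\epsilon_i^{\mathrm{true}},\beta_i^{\mathrm{true}})$, the (expected) log-likelihood \eqref{eq: IRLEED Loss} is maximized when the model distribution matches the data-generating one, which is achieved at $(\theta^\star,\epsilon^{\mathrm{true}},\beta^{\mathrm{true}})$; hence $\theta^\star$ attains $\max_\theta\loss(\theta)$, and the shift symmetry you identify explains the parenthetical ``non-unique.'' Your route instead observes that the likelihood depends on $(\theta,\epsilon_i)$ only through $\theta+\epsilon_i$, so after maximizing out $\epsilon$ the profile $\loss(\theta)$ is \emph{constant} on $\reals^k$ and every $\theta$ is a maximizer. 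That is correct — the shift $\epsilon_i'=\epsilon_i+\theta-\theta'$ is a bijection of $\reals^k$ fixing $\beta_i\in[0,\infty)$, the factors $p_0$ and $\Transitions$ carry no parameters, and the sup/attainment transfer is immediate — and it has the virtue of requiring no assumption on how $\demons$ was generated. The trade-off is that it proves the proposition by rendering it vacuous: it shows the marginal objective carries no information about $\theta$ whatsoever, which is a strictly stronger (and less flattering) statement than ``$\theta^\star$ is among the maximizers.'' This degeneracy is in fact acknowledged by the paper itself (``nothing in the loss prevents the model from\ldots learning zero shared reward $\theta$ and a completely different $\epsilon_i$''), and is why $\ell_2$-regularization on $\epsilon$ is added in practice; the KL-based argument is still worth recording separately, since it is the one that survives once such regularization breaks the shift symmetry. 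One small terminological nit: the degeneracy you exploit is purely a shift degeneracy in $\theta+\epsilon_i$; the scale degeneracy is a separate matter involving $\beta_i$.
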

	
	This provides us with a convenient way to derive the gradient of likelihood function $\loss(\theta,\epsilon,\beta)$.
	
	\begin{lemma}\label{cor: main result}
		Given the likelihood $\loss$ defined according to Eq.~\eqref{eq: IRLEED Loss} and demonstrator policies $\pi_{\theta,\epsilon_i,\beta_i}$ parameterized by Eq.~\eqref{eq: stochastic policy}, we can compute the gradients of $\loss$ as follows:
		\begin{align}
			\nabla_\theta\loss &= \sum_{i=1}^{N}\beta_i(\efev_{\demons_i}-\fev_{\pi_{\theta,\epsilon_i,\beta_i}}),\label{eq: gradient_theta}\\
			\nabla_{\epsilon_i}\loss &= \beta_i(\efev_{\demons_i}-\fev_{\pi_{\theta,\epsilon_i,\beta_i}}),\label{eq: gradient_epsilon}\\
			\frac{\partial\loss}{\partial\beta_i} &= (\theta+\epsilon_i)^\top(\efev_{\demons_i}-\fev_{\pi_{\theta,\epsilon_i,\beta_i}}).\label{eq: gradient_beta}
		\end{align}
	\end{lemma}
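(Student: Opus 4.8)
The plan is to reduce each demonstrator's contribution to the likelihood to an instance of standard Maximum Entropy IRL via a reparameterization, invoke the classical feature-matching gradient identity for that case, and finish with the chain rule. First, since $p_0(s_0)$ and $\Transitions(s_{t+1}|s_t,a_t)$ do not depend on $(\theta,\epsilon,\beta)$, I would write $\loss(\theta,\epsilon,\beta)=\sum_{i=1}^{N}\loss_i+\mathrm{const}$, where $\loss_i=\log\prod_{\tau_j\in\demons_i}\prod_{t}\pi_{\theta,\epsilon_i,\beta_i}(a_t|s_t)$ depends on the parameters only through the triple $(\theta,\epsilon_i,\beta_i)$; hence it suffices to analyze a single demonstrator and then sum.

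The key step is to observe that demonstrator $i$'s policy in \eqref{eq: stochastic policy} is itself a standard soft Bellman policy with a rescaled reward parameter. Setting $Q_i'(s,a):=\beta_i\,\qsoft_{\theta+\epsilon_i}(s,a)$ and $V_i'(s):=\beta_i\,\vsoft_{\theta+\epsilon_i}(s)$, the recursions \eqref{eq: q_soft}--\eqref{eq: v_soft} together with the $\beta_i$-modified normalizer become $Q_i'(s,a)=\tilde{\theta}_i^{\top}\features(s,a)+\gamma\,\EX_{s'\sim\Transitions(\cdot|s,a)}[V_i'(s')]$ and $V_i'(s)=\log\sum_{a\in\Actions}\exp(Q_i'(s,a))$, where $\tilde{\theta}_i:=\beta_i(\theta+\epsilon_i)$. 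Since the $\beta_i$-dependent normalizer in \eqref{eq: stochastic policy} is exactly $\exp(V_i'(s))$, we get $\pi_{\theta,\epsilon_i,\beta_i}(a|s)=\exp(Q_i'(s,a)-V_i'(s))$, which is precisely the soft Bellman policy \eqref{eq: IRL_pi} with parameter $\tilde{\theta}_i$. Consequently $\loss_i$ coincides with the standard MaxEnt IRL log-likelihood \eqref{eq: IRL_ML} evaluated at reward parameter $\tilde{\theta}_i$ on the dataset $\demons_i$, and in that reparameterized form $\loss_i$ depends on $\beta_i$ only through $\tilde{\theta}_i$.

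Next I would invoke the classical Maximum (Causal) Entropy IRL gradient identity~\cite{ziebart2008maximum,bloem2014infinite}: the gradient of that log-likelihood with respect to its reward parameter is the feature-expectation mismatch, $\nabla_{\tilde{\theta}_i}\loss_i=\efev_{\demons_i}-\fev_{\pi_{\tilde{\theta}_i}}=\efev_{\demons_i}-\fev_{\pi_{\theta,\epsilon_i,\beta_i}}$ (with per-demonstrator log-likelihoods and feature expectations normalized consistently, matching the notation $\efev$). The chain rule through $\tilde{\theta}_i=\beta_i(\theta+\epsilon_i)$ then finishes everything: $\partial\tilde{\theta}_i/\partial\theta=\beta_i I$ gives \eqref{eq: gradient_theta} after summing over $i$; $\partial\tilde{\theta}_i/\partial\epsilon_i=\beta_i I$ gives \eqref{eq: gradient_epsilon}; and $\partial\tilde{\theta}_i/\partial\beta_i=\theta+\epsilon_i$ gives \eqref{eq: gradient_beta}.

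The main obstacle lies inside the third step: a self-contained argument (rather than a citation) must re-derive the feature-matching gradient in the $\gamma$-discounted, infinite-horizon causal-entropy setting. That requires differentiating the soft value function through its Bellman fixed point and showing that the per-timestep gradients of $\log\pi_{\tilde{\theta}_i}$ telescope into the discounted feature count of a trajectory minus its expectation under $\pi_{\tilde{\theta}_i}$ -- a standard but delicate computation, needing the contraction property of the soft Bellman operator to justify interchanging differentiation with the fixed point. By contrast, the reparameterization in the second step is what makes everything else routine bookkeeping, and it is also what transparently explains the $\beta_i$ and $(\theta+\epsilon_i)$ prefactors appearing in the three gradient formulas -- in particular it renders the otherwise messy derivative with respect to $\beta_i$ immediate.
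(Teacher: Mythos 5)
Your proposal is correct and follows essentially the same route as the paper's proof: decompose $\loss=\sum_i\loss_i$, fold $\beta_i$ into a rescaled reward parameter $\tilde{\theta}_i=\beta_i(\theta+\epsilon_i)$ so that each demonstrator's policy becomes a standard soft Bellman policy, invoke the classical feature-matching gradient identity $\nabla_{\tilde{\theta}_i}\loss_i=\efev_{\demons_i}-\fev_{\pi_{\theta,\epsilon_i,\beta_i}}$ (which the paper likewise cites from Ziebart's thesis rather than re-deriving), and apply the chain rule. Your explicit verification that the rescaled $Q$ and $V$ satisfy the standard soft Bellman recursions is a welcome elaboration of the paper's one-line ``folding'' remark, but it is the same argument.
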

	\begin{proof}
		First, the parameter $\beta_i$ can be folded into the modified \textit{soft} Bellman policy of Eq.~\eqref{eq: stochastic policy} as the reward estimate: $\beta_i(\theta+\epsilon_i)^\top\features (s_t,a_t)$. Second, we can express the likelihood as $\loss=\sum_{i=1}^{N}\loss_i$, where  $\loss_i=  \log\prod_{\tau_j\in\demons_i}\prob(\traj_j|\theta,\epsilon_i,\beta_i)$. Setting $u_i=\beta_i(\theta+\epsilon_i)$, we know that $\nabla_{u_i}\loss_i=\efev_{\demons_i}-\fev_{\pi_{\theta,\epsilon_i,\beta_i}}$ $\forall i\in\{1,\ldots,N\}$ (see~\cite{ziebart2010thesis}, Theorem 6.2, Lemma A.2). Using the chain rule completes the proof.
	\end{proof}
	
	This result has an intuitive interpretation. Recall that for each demonstrator $i$, $\efev_{\demons_i}-\fev_{\pi_{\theta,\epsilon_i,\beta_i}}$ represents the difference between the empirical feature vector given samples from their policy, $\efev_{\demons_i}$, and the expected feature vector under the probabilistic model, $\fev_{\pi_{\theta,\epsilon_i,\beta_i}}$. With this in mind, Eq.~\eqref{eq: gradient_epsilon} shows that we update $\epsilon_i$ to match feature expectation with demonstrator $\pi_i$, just like standard IRL. On the other hand, Eq.~\eqref{eq: gradient_theta} shows that we update $\theta$ to match a weighted average over feature expectations provided by all demonstrators, where their respective precision $\beta_i$ determines their contribution. Lastly, Eq.~\eqref{eq: gradient_beta} shows that we update this precision $\beta_i$ to balance the expected returns of the demonstrator, $\pi_i$, and our probabilistic model, $\pi_{\theta,\epsilon_i,\beta_i}$, under our estimate of the perceived reward, $r_i(s,a)=(\theta+\epsilon_i)^\top\features(s,a)$. This way, when the probabilistic model outperforms the demonstration set under the estimated reward $r_i$, we decrease the precision $\beta_i$ to lower its relative performance, and vice versa.\par
	 
	This gives us a general algorithm that extends the IRL framework. Specifically, IRLEED follows an iterative approach to finding the true reward parameter $\theta^\star$: given demonstration set $\demons=\{(i,\demons_i)\}_{i=1}^{N}$ and estimates $\theta,\epsilon,\beta$, for each demonstrator $i\in\{1,\ldots,N\}$, alternate between \textit{inferring} the policy $\pi_{\theta,\epsilon_i,\beta_i}(a_t|s_t)$ based on Eq.~\eqref{eq: stochastic policy}, and \textit{optimizing} the parameters $\theta,\epsilon_i,\beta_i$ based on Eqs.~\eqref{eq: gradient_theta},~\eqref{eq: gradient_epsilon},~\eqref{eq: gradient_beta}, respectively. 

	As aforementioned, many techniques can be used for both the inference and optimization procedures. To infer the policy $\pi_{\theta,\epsilon_i,\beta_i}(a_t|s_t)$ given $\theta,\epsilon_i,\beta_i$, one can use soft versions of the standard value iteration and Q-learning algorithms, depending on whether the dynamics $\Transitions$ are provided or not~\cite{bloem2014infinite}. To compute feature expectations $\fev_{\pi_{\theta,\epsilon_i,\beta_i}}$ based on the inferred policy, one utilize the standard dynamic programming operator when the dynamics $\Transitions$ are known, or estimate the expectations by utilizing Monte Carlo simulations of the MDP when the dynamics $\Transitions$ are unknown.\par
	
	\subsection{Comparisons to Standard IRL and ILEED}\label{sec: Comparisons}
	This likelihood maximization problem is closely related to the maximum causal entropy IRL framework described in Eqs.~\eqref{eq: IRL_objective}-~\eqref{eq: v_soft}. Specifically, IRLEED makes two modification: (1) For each $i\in\{1,\ldots,N\}$, it defines individual dual variables $\theta_i=\theta+\epsilon_i$ that correspond to feature matching constraint $\efev_{\demons_i}=\fev_{\pi_{\theta,\epsilon_i,\beta_i}}$, tying the demonstrator policies together. (2) It uses a learnable parameter $\beta_i$ to tune the magnitude of uncertainty related to demonstrations provided in $\demons_i$. Moreover, IRLEED generalizes standard IRL.\par
	\begin{remark}
		\textit{IRLEED recovers the IRL framework when we set $\epsilon_i=[0]^k$ and $\beta_i=1$ as constants.}
	\end{remark}
	
	To show the advantage of IRLEED within the suboptimal demonstration setting, we note that the policy recovered by standard IRL can only perform as well as the average demonstration $\demons_i$ provided. This is a direct result of the feature matching constraint imposed by the IRL objective. 
	
	\begin{proposition}\label{prop: IRLEED vs IRL}
		Given $\demons=\{(i,\demons_i)\}_{i=1}^{N}$ produced by suboptimal policies $\pi_i$ according to Eq.~\eqref{eq: stochastic policy}, let $\theta_{IRL}$ denote the naive solution to the IRL problem defined by Eq.~\ref{eq: IRL_ML}, which treats demonstration set $\demons$ as one homogeneous set produced by $\pi_E$. The performance of IRL is bounded by:    
		\begin{equation}\label{eq: IRLEED vs IRL}
			\EX_{\pi_{\theta_{IRL}}}[r(s,a)]=\frac{1}{N}\sum_{i=1}^{N}\EX_{\pi_i}[r(s,a)]\leq\EX_{\pi_{\theta^\star}}[r(s,a)],
		\end{equation}
		where we assume that all demonstrators provide the same number of trajectories $M_i=M$.  
	\end{proposition}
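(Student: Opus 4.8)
The plan is to prove the stated equality and inequality separately, using the feature-matching characterization of the Max-Ent IRL solution together with the linearity of the true return, $\EX_\pi[r(s,a)]=\theta^{\star\top}\fev_\pi$, which holds because $r=\theta^{\star\top}\features$.

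For the equality, I would start from the defining property of $\theta_{IRL}$: as the solution of the naive IRL problem in Eq.~\eqref{eq: IRL_ML} applied to the pooled dataset $\demons$, it satisfies the first-order optimality condition $\fev_{\pi_{\theta_{IRL}}}=\efev_{\demons}$ --- the single-demonstrator, $\epsilon=0$, $\beta=1$ specialization of the gradient in \cref{cor: main result}, equivalently the feature-matching constraint of Eq.~\eqref{eq: IRL_objective}. Since every demonstrator contributes the same number $M_i=M$ of trajectories, the pooled empirical feature expectation splits with uniform weights, $\efev_{\demons}=\frac{1}{N}\sum_{i=1}^{N}\efev_{\demons_i}$, and in expectation over the sampled demonstrations (equivalently, in the large-$M$ limit --- the usual sense in which feature matching is invoked) $\efev_{\demons_i}=\fev_{\pi_i}$. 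Taking the inner product with $\theta^\star$ then gives $\EX_{\pi_{\theta_{IRL}}}[r(s,a)]=\theta^{\star\top}\fev_{\pi_{\theta_{IRL}}}=\theta^{\star\top}\efev_{\demons}=\frac{1}{N}\sum_{i=1}^{N}\theta^{\star\top}\fev_{\pi_i}=\frac{1}{N}\sum_{i=1}^{N}\EX_{\pi_i}[r(s,a)]$, which is the first equality in the claim.

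For the inequality, I would argue term by term. As recalled in \cref{Sec: Preliminary} (Theorem 6.2 and Corollary 6.11 of \cite{ziebart2010thesis}), the soft Bellman policy $\pi_{\theta^\star}$ induced by the true reward attains the maximum attainable true return, so $\EX_{\pi_i}[r(s,a)]\leq\EX_{\pi_{\theta^\star}}[r(s,a)]$ for every $i$. Averaging over $i=1,\dots,N$ and chaining with the equality just established yields $\EX_{\pi_{\theta_{IRL}}}[r(s,a)]=\frac{1}{N}\sum_{i=1}^{N}\EX_{\pi_i}[r(s,a)]\leq\EX_{\pi_{\theta^\star}}[r(s,a)]$, completing the proof.

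The only non-mechanical point, and the part I would be most careful about, is the identification $\efev_{\demons_i}=\fev_{\pi_i}$: the equality in the proposition is exact only in expectation over the demonstrations (or asymptotically in $M$), so I would either carry the ``in expectation'' qualifier throughout or state it up front, exactly as the underlying feature-matching argument of Max-Ent IRL already does; I would also note feasibility of the pooled constraint, which holds because a uniform mixture of the achievable feature expectations $\fev_{\pi_i}$ is itself realized by a stationary policy in the discounted setting. The remaining ingredients --- uniform weighting from $M_i=M$, linearity of $r$ in $\features$, and the optimality of $\pi_{\theta^\star}$ recalled in the preliminaries --- are immediate from the setup and the cited results.
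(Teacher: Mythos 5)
Your proof is correct and follows essentially the same route as the paper: the feature-matching optimality condition of the pooled Max-Ent IRL problem, the uniform split of $\efev_{\demons}$ into $\frac{1}{N}\sum_{i=1}^{N}\efev_{\demons_i}$ under $M_i=M$, linearity of the true return in the features via $r=\theta^{\star\top}\features$, and the optimality of the soft Bellman policy at $\theta^\star$ to bound each term. Your explicit caveat that $\efev_{\demons_i}=\fev_{\pi_i}$ holds only in expectation over the sampled demonstrations (or asymptotically in $M$) is a point the paper leaves implicit in writing the equality exactly, and is worth retaining.
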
	
	
	As mentioned in \cref{prop: loss minimizer}, IRLEED can recover the true reward parameter $\theta^\star$, which outperforms $\theta_{IRL}$ unless the demonstrations provided are optimal. Unfortunately, since $\theta^\star$ is not the unique solution to IRLEED, we can not make guarantees on recovering the true reward parameter $\theta^\star$. Nonetheless, our experimental results in Section~\ref{sec: Experiments} demonstrate that IRLEED improves performance over IRL frameworks in both the feature matching setting described here, as well as the general IRL setting, which can be applied to continuous control tasks with high dimensional input spaces.
		
	Finally, we note that when the MDP is deterministic, we can express the policy without recursion: $\pi_{\theta,\epsilon_i,\beta_i}(\traj)\propto\exp(\beta_i(\theta+\epsilon_i)^\top\features(\tau)),$
	where $\features(\tau)=\sum_{(s,a)\in\tau}\features(s,a)$, and $\pi(\tau)$ can directly map from trajectories due to the deterministic dynamics $\Transitions$. Replacing the probability $\prob(\traj|\theta,\epsilon_i,\beta_i)$ with $\pi_{\theta,\epsilon_i,\beta_i}(\traj)$, maximizing the likelihood defined by Eq.~\eqref{eq: IRLEED Loss} corresponds to the supervised learning approach employed by ILEED~\cite{beliaev_imitation_2022}. Hence unlike ILEED, our framework provides a dynamics--aware solution to the problem of learning from mixtures of suboptimal demonstrations. 
	
	\begin{remark}\label{rem: IRLEED vs ILEED}
		\textit{IRLEED recovers the ILEED framework if we assume all demonstrators $i$ have knowledge of the true reward, $\epsilon_i=[0]^k$, and ignore the dynamics of the MDP.} 
	\end{remark}
	
	\subsection{Practical Algorithm}\label{sec: Practical Algorithm}
	While the results in the previous sections provide insight on how to learn from suboptimal demonstrations using IRL, there are two main concerns: (1) Relying on hand-crafted features $\features$ can hinder our ability to express complex behaviors. (2) Nothing in the loss prevents the model from overfitting by learning zero shared reward $\theta$, and a completely different reward deviation $\epsilon_i$, for each demonstrator $i$. To address these concerns, we can extend our approach to generalized IRL:
 	\begin{equation}\label{eq: general_IRL}
		\max_{r\in\mathcal{R}}\min_{\pi\in\Pi}\EX_{\pi_E}[r(s,a)]-\EX_{\pi}[r(s,a)] - H(\pi) - \psi(r),
	\end{equation}
    where the reward $r$ belongs to a non restrictive set of functions $\mathcal{R}=\{r:\States\times\Actions\rightarrow\reals\}$, and $\psi$ is a convex regularizer that is used to prevent overfitting~\cite{ho_generative_2016}.\par 
    
    In brief, we extend Eq.~\eqref{eq: general_IRL} to the suboptimal setting by using neural networks to parameterize $r$ and $\epsilon=\{\epsilon_i\}_{i=1}^{N}$, and implement $\ell_2$--regularization on the outputs of $\epsilon$ to mitigate overfitting. Our experiment results in Section~\ref{sec: Experiments} demonstrate that IRLEED performs well in this generalized setting, and can even be trained offline when paired with Inverse soft-Q learning (IQ), an approach that avoids the iterative process defined by learning a single Q-function, implicitly representing both reward and policy~\cite{garg_iq_learn_2022}. We leave the details of this section to \cref{sec: App Practical Algorithm}, covering how IRLEED can be utilized alongside generalized IRL algorithms to address both of the aforementioned concerns.\par

    \begin{figure}[!t]
		\centering
        \includegraphics[width=.9\columnwidth]{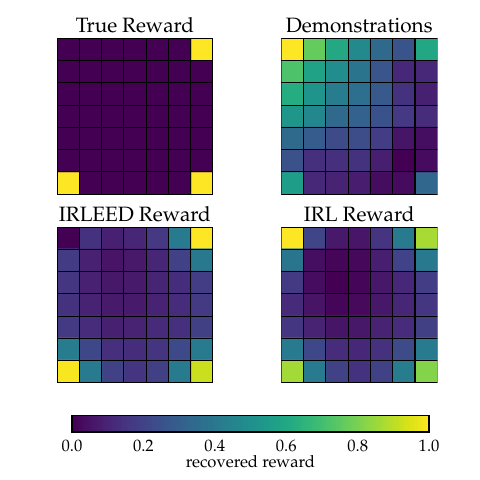}
		\caption{We visualize the reward recovered by IRLEED and IRL when trained using suboptimal demonstrations. Top left shows the true reward, where the three yellow corners are terminal states. Top right shows the normalized state visitation frequency over the entire dataset. Bottom left and right show the normalized rewards recovered by IRLEED and IRL respectively. We can see that the provided demonstrations are misaligned with the ground truth reward: the state visitation frequency for the top left corner is higher due to demonstrator suboptimalities. As expected, the feature matching constraint of IRL absorbs this suboptimality. Although the reward recovered by IRL contains information about the true reward, it is incorrectly biasing the top left corner. On the other hand, IRLEED is able to remove this bias, providing a better estimate of the ground truth reward.}
        \label{fig:IRL_vs_IRLEED_rec}
    \end{figure}
    
	\begin{figure}[!t]
 		\includegraphics[width=.9\columnwidth]{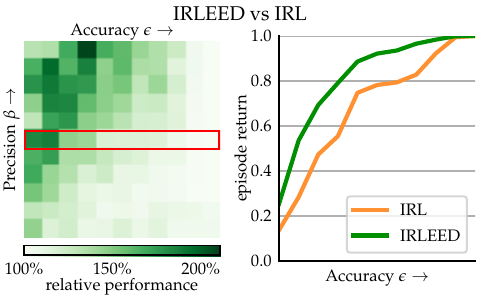}
		\caption{We compare the performance of the policies recovered by IRLEED and IRL. The left plot shows the relative performance of IRLEED over IRL under varying dataset settings, where the top right corner corresponds to expert data. The right plot shows the performance of both policies as we increase the accuracy $\epsilon$ of demonstrators in the dataset, corresponding to the data outlined in red in the left plot. On average, IRLEED provided a $30.3\%$ improvement over IRL.}
        \label{fig:IRL_vs_IRLEED}
	\end{figure}

 	\begin{table*}[!t]
	\centering
	\setlength{\tabcolsep}{2.0pt}
	\begin{tabular}{cc||c|cc|ccc||c|cc|ccc}
		\toprule
		\multicolumn{2}{c}{Setting} & \multicolumn{6}{c}{Cartpole} & \multicolumn{6}{c}{Lunar Lander}\\
		\cmidrule(r){1-2}\cmidrule(r){3-8}\cmidrule(r){9-14}
		Prec. & Acc. & Mean & \textit{IQ}$^*$ & \textit{IRLEED}$^*$ & ILEED & IQ & IRLEED & Mean & \textit{IQ}$^*$ & \textit{IRLEED}$^*$ & ILEED & IQ & IRLEED\\
		\textit{L} & \textit{L} & $0.58$ & $0.97$ & $\textbf{1.55}$ & $1.38$ & $0.96$ & $\textbf{1.56}$ & $0.46$ & $0.68$ & $\textbf{1.28}$ & $0.84$ & $0.81$ & $\textbf{1.34}$ \\
		\textit{L} & \textit{H} & $0.70$ & $0.92$ & $\textbf{1.00}$ & $1.00$ & $0.96$ & $\textbf{1.00}$ & $0.54$ & $0.84$ & $\textbf{1.14}$ & $1.06$ & $1.05$ & $\textbf{1.18}$ \\
		\textit{H} & \textit{L} & $0.64$ & $1.03$ & $\textbf{1.09}$ & $1.05$ & $0.95$ & $\textbf{1.05}$ & $0.57$ & $0.94$ & $\textbf{1.30}$ & $0.85$ & $0.99$ & $\textbf{1.14}$ \\
		\textit{H} & \textit{H} & $1.00$ & $0.99$ & $1.00$ & $0.92$ & $1.00$ & $1.00$ & $0.86$ & $0.93$ & $\textbf{0.98}$ & $1.01$ & $0.99$ & $0.98$ \\
		\bottomrule
	\end{tabular}
	\caption{Simulated Control Tasks}
	\raggedright We list the average return of the recovered policies (IQ, IRLEED, ILEED), relative to the best demonstrator's return, where \textit{IQ}$^*$ and \textit{IRLEED}$^*$ are used to denote \textit{online} algorithms. This is provided for $4$ dataset settings (Prec., Acc.), where we list the mean performance (Mean) of all demonstrators relative to the best. The last row (H,H) corresponds to the clean dataset. The return of the best demonstrator for each setting (top to bottom) is $260,500,411,500$ (Cartpole) and $171,213,166,277$ (Lunar Lander).
        \label{tab:cartpole}
	\end{table*}

	\section{Experiments} \label{sec: Experiments}
    In this section we evaluate how IRLEED performs when learning from suboptimal demonstrations, using experiments in both online and offline IL settings, with simulated and human-generated data. Throughout this section we compared IRLEED to maximum entropy IRL~\cite{ziebart2010modeling}, Inverse soft-Q learning (IQ)~\cite{garg_iq_learn_2022}, and ILEED~\cite{beliaev_imitation_2022}. In total, we performed four sets of experiments: (1) Using a custom Gridworld environment, we simulated suboptimal demonstrations with varying levels of \textit{precision} and \textit{accuracy}, and compared the performance and recovered rewards of IRLEED and maximum entropy IRL. (2) Using the control tasks, Cartpole, and Lunar Lander, we simulated suboptimal demonstrations with varying levels of \textit{expertise} and \textit{accuracy}, and compared the performance of IRLEED, IQ, and ILEED in both the online and offline settings. (3) Using the Atari environments, Space Invaders, and Qbert, we combined both simulated and human demonstrations, and compared the performance of IRLEED, IQ, and ILEED in the offline setting. (4) Using the continuous control Mujoco task Hopper, we collected suboptimal demonstrations from pretrained policies, and compared the performance of IRLEED and IQ.\par

    \noindent\textbf{Implementation}\quad 
    We utilized the codebases provided by the authors to implement ILEED and IQ, creating IRLEED ontop of the IQ algorithm by adding a learnable parameter $\beta_i$, and an additional critic network $\epsilon_i$, for each demonstrator $i$. Since IQ learns a reward and policy directly by using a single Q function as a critic, we used our additional critic network $\epsilon_i$ to directly add bias to the state action function instead of the reward. For further implementation details refer to \cref{Sec: Appendix Implementation Details}.

    \noindent\textbf{1. Gridworld}\quad
    For the Gridworld experiments, we implemented IRL and IRLEED by utilizing stochastic value iteration on the reward estimate $\theta$ to infer the policy $\pi$, and Monte Carlo simulations of the MDP to estimate the feature expectations required to optimize $\theta$, repeating both steps until convergence. For IRLEED, we updated $\epsilon$ and $\beta$ only after $\theta$ converged, and repeated this for two iterations.
    
    The reward in this setting was linearly parameterized as $r(s)=\theta^\top\features(s)$, where $\features(s)$ is a one hot encoding for each gridworld state. For each level of \textit{precision} and \textit{accuracy}, we generated $5$ suboptimal policies according to \cref{eq: stochastic policy}, where $\beta_i$ was sampled from a uniform distribution starting at $0$, with mean equal to the \textit{precision} level, and $\epsilon_i$ was sampled from a multivariate normal distribution $\mathcal{N}([0]^k,\mathbf{I}_k/\lambda^2)$, with $\lambda$ equal to the \textit{accuracy} level. We tested $121$ dataset settings, collecting $40$ trajectories from each policy, and using $100$ seeds for each setting. Note that we visualized the recovered reward $r(s,a)$ for both IRLEED and IRL under one dataset setting in \cref{fig:IRL_vs_IRLEED_rec}. Given demonstrations that are misaligned with the ground truth reward, we saw that the feature matching constraint of IRL absorbed this suboptimality, while IRLEED was able to remove this bias, providing us with a better estimate of the ground truth reward.\par
     
    To see the effect of dataset quality on performance, we use \cref{fig:IRL_vs_IRLEED} to show the relative improvement of the policy recovered by IRLEED over maximum entropy IRL for each dataset setting. We can see from the left plot that IRLEED provides improvement over IRL as we decrease \textit{accuracy} $\epsilon$ (by decreasing $\lambda$), especially when \textit{precision} $\beta$ is high. This signifies the importance of modeling the accuracy $\epsilon$, or reward bias, in addition to the precision $\beta$, or action variance, when learning from suboptimal demonstrations. This point is further highlighted in the right plot, which shows that while both methods can achieve good performance under low precision settings, IRLEED can tolerate lower demonstrator accuracy compared to IRL.\par
            
    \noindent\textbf{2. Simulated Control Tasks}\quad
    For this experiment, each level of \textit{precision} and \textit{accuracy} corresponded to $3$ suboptimal policies generated according to $\pi_i(a|s)\propto\exp(\beta_i(Q(s,a)+\alpha\epsilon_i(s,a)))$, where $Q(s,a)$ is a pretrained critic network, $\epsilon_i(s,a)$ is a randomly initialized critic network, and $\alpha$ is a scaling factor which is set to zero for the high \textit{accuracy} setting. For the low \textit{precision} setting, $\beta_i$ was sampled from a uniform distribution, whereas for the high \textit{precision} setting, the stochasticity of the policy was removed by following the maximum state action value. We utilized both the online and offline setting, using $30$ seeds for each dataset setting, and displaying our results in \cref{tab:cartpole}. Note that with near binary performance of the recovered policies, the variance is due to demonstration quality, hence we report averages relative to the best demonstrator computed over all initializations.\par   

    We can see that in the online setting, IRLEED outperforms its ablated counterpart, IQ, under all $3$ suboptimal dataset settings (not including the clean dataset), for both environments. Furthermore, we see that in the more complex Lunar Lander environment, IQ can not perform as well as the best demonstrator, whereas IRLEED consistently performs on par or better than the best demonstrator. To have a fair comparison between IRLEED and its behavior cloning counterpart, ILEED, we repeat these experiments without online access to environment interactions. We can see that offline IRLEED outperforms ILEED, under all $3$ suboptimal dataset settings, for both environments. Furthermore, although ILEED has comparable performance in the simpler Cartpole environment, it performs poorly in the more complex Lunar Lander environment. Moreover, we can see that ILEED shows diminished performance when demonstrators have low accuracy $\epsilon$, signifying that it can not account for the reward biases present in these settings, as was highlighted earlier in \cref{rem: IRLEED vs ILEED}. \par 
        
    \noindent\textbf{3. Atari with Human Data}\quad
    Our demonstrations comprised two unique sources, dataset \textit{A}: simulated data using a pretrained expert policy and dataset \textit{B}: collected data using adept human players~\cite{kurin2017atari}. Since the human dataset \textit{B} was acquired using screenshots of web-browsers, there is a mismatch in sampling frequencies and colors compared to the simulated environment used in dataset \textit{A}. This provides a challenge akin to crowd-sourcing data from varying sources, where we test if IRLEED can improve IQ when training on the combined dataset \textit{AB} as opposed to dataset \textit{A} alone.\par
    
    The results shown in \cref{tab:space_invaders} list the return of the recovered policies for both environments (Space Invaders and Qbert), averaged over $5$ seeds. While using dataset \textit{A} is enough to reach decent performance, all methods do poorly when trained exclusively on dataset \textit{B} due to the mismatch between the simulation environment and the data. However, we can see that IRLEED significantly improves its performance when utilizing the combined dataset \textit{AB}. Overall we see IRLEED shows a $17\%$ improvement over IQ, and a $41\%$ improvement over ILEED in the combined \textit{AB} setting.\par

    \begin{table}[!b]
		\centering
		\setlength{\tabcolsep}{2.5pt}
		\begin{tabular}{c||ccc|ccc}
			\toprule
			\multicolumn{1}{c}{Dataset} & \multicolumn{3}{c}{Space Invaders} & \multicolumn{3}{c}{Qbert}\\
			\cmidrule(r){1-1}\cmidrule(r){2-4}\cmidrule(r){5-7}
			& ILEED & IQ & IRLEED & ILEED & IQ & IRLEED\\
			\textit{AB} & $802$ & $755$ & $\textbf{910}$ & $5511$ & $8182$ & $\textbf{9336}$\\
			\textit{A}  & $783$ & $754$ & $768$ & $5926$ & $8092$ & $8049$\\
			\textit{B}  & $110$ & $179$ & $123$ & $0$ & $809$ & $1087$\\
			\bottomrule
		\end{tabular}
		\caption{Utilizing Demonstrations from Varying Sources}
		\raggedright We list the average return of the recovered policies, using the simulated dataset \textit{A}, human dataset \textit{B}~\cite{kurin2017atari}, and their combination \textit{AB}. We note that IRLEED maintains similar per-seed variance as IQ on \textit{AB} - Space Invaders: $61$ vs $54$ (IQ), Qbert: $315$ vs $376$ (IQ).  
        \label{tab:space_invaders}
	\end{table}

    \begin{figure}[!t]
		\centering
        \includegraphics[width=.9\columnwidth]{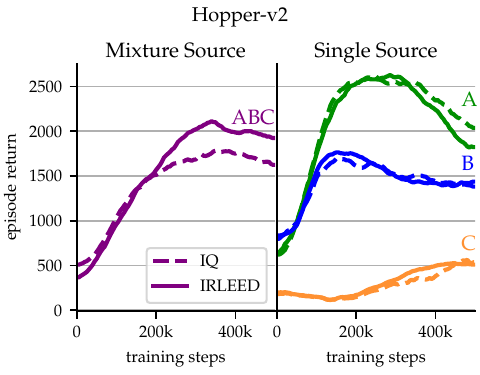}
		\caption{We plot the mean episode return of the policies learned by IRLEED (solid) and IQ (dashed) as they train on data from a single source (right) vs a mixture source (left).}
        \label{fig:hopper}
    \end{figure}
    \noindent\textbf{4. Mujoco}\quad
    For our results on the Mujoco task Hopper, we collect a single trajectory from three sources, dataset \textit{A}: a pretrained expert policy (return of $3457$), dataset \textit{B}: a partially trained policy (return of $2121$) and a barely trained policy \textit{C} (return of $216$). Due to the complex nature of this continuous control tasks, the partially trained policies can not provide adequate demonstrations for learning. Hence our goal was to see if IRLEED could improve the capability of IQ when training on the combined dataset \textit{ABC} by filtering the suboptimal demonstrations.\par
    
    The results shown in \cref{fig:hopper} plot the mean episode return of the policies during training, averaged over $15$ initializations. As expected, both methods do poorly when training on dataset \textit{C}, while achieving improved performance when training on dataset \textit{B}. However, we see IRLEED outperforms IQ when training on the combined dataset \textit{ABC} since it is more robust to the suboptimal demonstrations present in the data. However, we note that there is still a gap between the desired performance we see when training on dataset \textit{A} alone, which demonstrates the difficulty of the task at hand.

	\section{Conclusion} \label{sec: Discussion}
    \noindent\textbf{Summary}\quad This paper addresses the challenges of leveraging suboptimal and heterogeneous demonstrations in IL by introducing IRLEED, a novel IRL framework. By integrating a general model of demonstrator suboptimality that accounts for reward bias and action variance, with a maximum entropy IRL framework, IRLEED mitigates the limitations inherent in standard IL algorithms, which often fail to account for the diversity and imperfections in real-world data. Our key insight is that: \textit{supervised learning frameworks should account for the heterogeneity in crowd-sourced datasets by leveraging information about the source identity during training.}\par
    \noindent\textbf{Limitations}\quad Although we have shown that IRLEED is a general framework that improves over IRL in a multitude of settings, design choices such as the demonstrator model and reward regularization can be further analyzed. While we do try to make our framework easy to interpret, there is inherent complexity added as we need to simultaneously learn multiple demonstrator models. We also note that apart from the Gridworld experiments, we did not analyze reward recovery as it is hard to interpret in complex environments. However, our work addresses a problem setting where achieving desirable performance is a challenge in itself, and many recovered policies are below optimal. Lastly, we have not provided guarantees that state whether IRLEED will recover a better reward estimate compared to IRL. While we believe this is a good direction for future work, our goal was to present a novel formulation easily integrated with existing IRL methods, where further theoretical analysis is only possible under additional assumptions, and not crucial given the nonconvexity of the general setting.\par 
    
\section*{Acknowledgements}
This research was supported by NSF ECCS grant \#2419982.

\bibliography{refs.bib}  

\begin{thebibliography}{38}
\providecommand{\natexlab}[1]{#1}

\bibitem[{Abbeel and Ng(2004)}]{abbeel_apprenticeship_2004}
Abbeel, P.; and Ng, A.~Y. 2004.
\newblock Apprenticeship learning via inverse reinforcement learning.
\newblock In \emph{Proceedings of the twenty-first international conference on Machine learning}, 1.

\bibitem[{Argall et~al.(2009)Argall, Chernova, Veloso, and Browning}]{argall2009survey}
Argall, B.~D.; Chernova, S.; Veloso, M.; and Browning, B. 2009.
\newblock A survey of robot learning from demonstration.
\newblock \emph{Robotics and autonomous systems}, 57(5): 469--483.

\bibitem[{Baker, Saxe, and Tenenbaum(2009)}]{baker2009action}
Baker, C.~L.; Saxe, R.; and Tenenbaum, J.~B. 2009.
\newblock Action understanding as inverse planning.
\newblock \emph{Cognition}, 113(3): 329--349.

\bibitem[{Beliaev et~al.(2022)Beliaev, Shih, Ermon, Sadigh, and Pedarsani}]{beliaev_imitation_2022}
Beliaev, M.; Shih, A.; Ermon, S.; Sadigh, D.; and Pedarsani, R. 2022.
\newblock Imitation learning by estimating expertise of demonstrators.
\newblock In \emph{International Conference on Machine Learning}, 1732--1748. PMLR.

\bibitem[{Belkhale, Cui, and Sadigh(2023)}]{belkhale2023data}
Belkhale, S.; Cui, Y.; and Sadigh, D. 2023.
\newblock Data Quality in Imitation Learning.
\newblock \emph{arXiv preprint arXiv:2306.02437}.

\bibitem[{Bloem and Bambos(2014)}]{bloem2014infinite}
Bloem, M.; and Bambos, N. 2014.
\newblock Infinite time horizon maximum causal entropy inverse reinforcement learning.
\newblock In \emph{53rd IEEE Conference on Decision and Control}, 4911--4916.

\bibitem[{Brockman et~al.(2016)Brockman, Cheung, Pettersson, Schneider, Schulman, Tang, and Zaremba}]{openai_gym}
Brockman, G.; Cheung, V.; Pettersson, L.; Schneider, J.; Schulman, J.; Tang, J.; and Zaremba, W. 2016.
\newblock OpenAI Gym.
\newblock arXiv:1606.01540.

\bibitem[{Brown, Goo, and Niekum(2020)}]{brown2020better}
Brown, D.~S.; Goo, W.; and Niekum, S. 2020.
\newblock Better-than-demonstrator imitation learning via automatically-ranked demonstrations.
\newblock In \emph{Conference on robot learning}, 330--359. PMLR.

\bibitem[{Cao and Sadigh(2021)}]{cao2021learning}
Cao, Z.; and Sadigh, D. 2021.
\newblock Learning from imperfect demonstrations from agents with varying dynamics.
\newblock \emph{IEEE Robotics and Automation Letters}, 6(3): 5231--5238.

\bibitem[{Chen et~al.(2023)Chen, Jayanthi, Paleja, Martin, Zakharov, and Gombolay}]{chen2023fast}
Chen, L.; Jayanthi, S.; Paleja, R.~R.; Martin, D.; Zakharov, V.; and Gombolay, M. 2023.
\newblock Fast lifelong adaptive inverse reinforcement learning from demonstrations.
\newblock In \emph{Conference on Robot Learning}, 2083--2094. PMLR.

\bibitem[{Chen et~al.(2020)Chen, Paleja, Ghuy, and Gombolay}]{chen2020joint}
Chen, L.; Paleja, R.; Ghuy, M.; and Gombolay, M. 2020.
\newblock Joint goal and strategy inference across heterogeneous demonstrators via reward network distillation.
\newblock In \emph{Proceedings of the 2020 ACM/IEEE international conference on human-robot interaction}, 659--668.

\bibitem[{Chen, Paleja, and Gombolay(2021)}]{chen_learning_2020}
Chen, L.; Paleja, R.; and Gombolay, M. 2021.
\newblock Learning from suboptimal demonstration via self-supervised reward regression.
\newblock In \emph{Conference on robot learning}, 1262--1277. PMLR.

\bibitem[{Eysenbach et~al.(2018)Eysenbach, Gupta, Ibarz, and Levine}]{eysenbach2018diversity}
Eysenbach, B.; Gupta, A.; Ibarz, J.; and Levine, S. 2018.
\newblock Diversity is All You Need: Learning Skills without a Reward Function.
\newblock In \emph{International Conference on Learning Representations}.

\bibitem[{Fu, Luo, and Levine(2018)}]{fu_learning_2018}
Fu, J.; Luo, K.; and Levine, S. 2018.
\newblock Learning Robust Rewards with Adverserial Inverse Reinforcement Learning.
\newblock In \emph{International Conference on Learning Representations}.

\bibitem[{Gao et~al.(2018)Gao, Xu, Lin, Yu, Levine, and Darrell}]{gao2018reinforcement}
Gao, Y.; Xu, H.; Lin, J.; Yu, F.; Levine, S.; and Darrell, T. 2018.
\newblock Reinforcement learning from imperfect demonstrations.
\newblock \emph{arXiv preprint arXiv:1802.05313}.

\bibitem[{Garg et~al.(2021)Garg, Chakraborty, Cundy, Song, and Ermon}]{garg_iq_learn_2022}
Garg, D.; Chakraborty, S.; Cundy, C.; Song, J.; and Ermon, S. 2021.
\newblock Iq-learn: Inverse soft-q learning for imitation.
\newblock \emph{Advances in Neural Information Processing Systems}, 34: 4028--4039.

\bibitem[{Haarnoja et~al.(2017)Haarnoja, Tang, Abbeel, and Levine}]{haarnoja_reinforcement_2017}
Haarnoja, T.; Tang, H.; Abbeel, P.; and Levine, S. 2017.
\newblock Reinforcement learning with deep energy-based policies.
\newblock In \emph{International conference on machine learning}, 1352--1361. PMLR.

\bibitem[{Haarnoja et~al.(2018)Haarnoja, Zhou, Abbeel, and Levine}]{haarnoja_soft_2018}
Haarnoja, T.; Zhou, A.; Abbeel, P.; and Levine, S. 2018.
\newblock Soft actor-critic: Off-policy maximum entropy deep reinforcement learning with a stochastic actor.
\newblock In \emph{International conference on machine learning}, 1861--1870. PMLR.

\bibitem[{Hadfield-Menell et~al.(2017)Hadfield-Menell, Milli, Abbeel, Russell, and Dragan}]{hadfield2017inverse}
Hadfield-Menell, D.; Milli, S.; Abbeel, P.; Russell, S.~J.; and Dragan, A. 2017.
\newblock Inverse reward design.
\newblock \emph{Advances in neural information processing systems}, 30.

\bibitem[{Ho and Ermon(2016)}]{ho_generative_2016}
Ho, J.; and Ermon, S. 2016.
\newblock Generative adversarial imitation learning.
\newblock \emph{Advances in neural information processing systems}, 29.

\bibitem[{Kostrikov, Nachum, and Tompson(2019)}]{kostrikov2019imitation}
Kostrikov, I.; Nachum, O.; and Tompson, J. 2019.
\newblock Imitation Learning via Off-Policy Distribution Matching.
\newblock In \emph{International Conference on Learning Representations}.

\bibitem[{Kuhar et~al.(2023)Kuhar, Cheng, Chopra, Bronars, and Xu}]{pmlr-v229-kuhar23a}
Kuhar, S.; Cheng, S.; Chopra, S.; Bronars, M.; and Xu, D. 2023.
\newblock Learning to Discern: Imitating Heterogeneous Human Demonstrations with Preference and Representation Learning.
\newblock In Tan, J.; Toussaint, M.; and Darvish, K., eds., \emph{Proceedings of The 7th Conference on Robot Learning}, volume 229 of \emph{Proceedings of Machine Learning Research}, 1437--1449. PMLR.

\bibitem[{Kurin et~al.(2017)Kurin, Nowozin, Hofmann, Beyer, and Leibe}]{kurin2017atari}
Kurin, V.; Nowozin, S.; Hofmann, K.; Beyer, L.; and Leibe, B. 2017.
\newblock The atari grand challenge dataset.
\newblock \emph{arXiv preprint arXiv:1705.10998}.

\bibitem[{Laidlaw and Dragan(2021)}]{laidlaw2022boltzmann}
Laidlaw, C.; and Dragan, A. 2021.
\newblock The Boltzmann Policy Distribution: Accounting for Systematic Suboptimality in Human Models.
\newblock In \emph{International Conference on Learning Representations}.

\bibitem[{Luce(1959)}]{luce1959individual}
Luce, R.~D. 1959.
\newblock \emph{Individual choice behavior}.
\newblock John Wiley.

\bibitem[{Mandlekar et~al.(2021)Mandlekar, Xu, Wong, Nasiriany, Wang, Kulkarni, Fei-Fei, Savarese, Zhu, and Mart\'{i}n-Mart\'{i}n}]{robomimic2021}
Mandlekar, A.; Xu, D.; Wong, J.; Nasiriany, S.; Wang, C.; Kulkarni, R.; Fei-Fei, L.; Savarese, S.; Zhu, Y.; and Mart\'{i}n-Mart\'{i}n, R. 2021.
\newblock What Matters in Learning from Offline Human Demonstrations for Robot Manipulation.
\newblock In \emph{Conference on robot learning}.

\bibitem[{Ng and Russell(2000)}]{ng2000algorithms}
Ng, A.~Y.; and Russell, S.~J. 2000.
\newblock Algorithms for Inverse Reinforcement Learning.
\newblock In \emph{Proceedings of the Seventeenth International Conference on Machine Learning}, 663--670.

\bibitem[{Pomerleau(1991)}]{pomerleau1991efficient}
Pomerleau, D.~A. 1991.
\newblock Efficient training of artificial neural networks for autonomous navigation.
\newblock \emph{Neural computation}, 3(1): 88--97.

\bibitem[{Reddy, Dragan, and Levine(2019)}]{reddy_sqil_2019}
Reddy, S.; Dragan, A.~D.; and Levine, S. 2019.
\newblock SQIL: Imitation Learning via Reinforcement Learning with Sparse Rewards.
\newblock In \emph{International Conference on Learning Representations}.

\bibitem[{Ross and Bagnell(2010)}]{ross2010efficient}
Ross, S.; and Bagnell, D. 2010.
\newblock Efficient reductions for imitation learning.
\newblock In \emph{Proceedings of the thirteenth international conference on artificial intelligence and statistics}, 661--668. JMLR Workshop and Conference Proceedings.

\bibitem[{Shiarlis, Messias, and Whiteson(2016)}]{shiarlis2016inverse}
Shiarlis, K.; Messias, J.; and Whiteson, S. 2016.
\newblock Inverse reinforcement learning from failure.
\newblock \emph{International Foundation for Autonomous Agents and Multiagent Systems}.

\bibitem[{Wu et~al.(2019)Wu, Charoenphakdee, Bao, Tangkaratt, and Sugiyama}]{wu2019imitation}
Wu, Y.-H.; Charoenphakdee, N.; Bao, H.; Tangkaratt, V.; and Sugiyama, M. 2019.
\newblock Imitation learning from imperfect demonstration.
\newblock In \emph{International Conference on Machine Learning}, 6818--6827. PMLR.

\bibitem[{Xu et~al.(2022)Xu, Zhan, Yin, and Qin}]{xu2022discriminator}
Xu, H.; Zhan, X.; Yin, H.; and Qin, H. 2022.
\newblock Discriminator-weighted offline imitation learning from suboptimal demonstrations.
\newblock In \emph{International Conference on Machine Learning}, 24725--24742. PMLR.

\bibitem[{Yang, Levine, and Nachum(2021)}]{yang2021trail}
Yang, M.; Levine, S.; and Nachum, O. 2021.
\newblock TRAIL: Near-Optimal Imitation Learning with Suboptimal Data.
\newblock In \emph{International Conference on Learning Representations}.

\bibitem[{Zhang et~al.(2021)Zhang, Cao, Sadigh, and Sui}]{zhang2021confidence}
Zhang, S.; Cao, Z.; Sadigh, D.; and Sui, Y. 2021.
\newblock Confidence-aware imitation learning from demonstrations with varying optimality.
\newblock \emph{Advances in Neural Information Processing Systems}, 34: 12340--12350.

\bibitem[{Ziebart(2010)}]{ziebart2010thesis}
Ziebart, B.~D. 2010.
\newblock \emph{Modeling purposeful adaptive behavior with the principle of maximum causal entropy}.
\newblock Carnegie Mellon University.

\bibitem[{Ziebart, Bagnell, and Dey(2010)}]{ziebart2010modeling}
Ziebart, B.~D.; Bagnell, J.~A.; and Dey, A.~K. 2010.
\newblock Modeling interaction via the principle of maximum causal entropy.
\newblock In \emph{Proceedings of the 27th International Conference on International Conference on Machine Learning}, 1255–1262.
\newblock ISBN 9781605589077.

\bibitem[{Ziebart et~al.(2008)Ziebart, Maas, Bagnell, and Dey}]{ziebart2008maximum}
Ziebart, B.~D.; Maas, A.; Bagnell, J.~A.; and Dey, A.~K. 2008.
\newblock Maximum entropy inverse reinforcement learning.
\newblock In \emph{Proceedings of the 23rd national conference on Artificial intelligence-Volume 3}, 1433--1438.

\end{thebibliography}

    \clearpage
    
	\appendix
	\section{Extension to Generalized IRL}\label{sec: App Practical Algorithm}
	While the results in \cref{sec: IRLEED} provide insight on how to learn from suboptimal demonstrations using IRL, relying on hand-crafted features $\features$ can hinder our ability to express complex behavior. In general, the true reward $r$ can belong to a non restrictive set of functions $\mathcal{R}=\{r:\States\times\Actions\rightarrow\reals\}$. In this case, one can utilize the generalized IRL objective instead:
	\begin{equation}\label{eq: general_IRL_2}
		\max_{r\in\mathcal{R}}\min_{\pi\in\Pi}\EX_{\pi_E}[r(s,a)]-\EX_{\pi}[r(s,a)] - H(\pi) - \psi(r),
	\end{equation}
	where $\psi$ is a convex reward regularizer that smoothly penalizes differences between occupancy measures to prevent overfitting~\cite{ho_generative_2016}. A naive solution to this nested min-max objective involves an outer loop learning the reward, and an inner loop executing maximum entropy RL with this reward to find the optimal policy. As before, the optimal policy under reward $r$ takes the form of the soft Bellman policy defined in \cref{eq: IRL_pi}, where $r(s,a)$ replaces the linear reward $\theta^\top\features(s,a)$. The description of IRLEED in the generalized setting follows directly.\par 
	
	Since the suboptimal demonstrator model defined in \cref{eq: stochastic policy} extends to this setting, we can parameterize our estimate of demonstrator $i$'s policy with the soft Bellman policy $\hat{\pi}_i$ under the reward estimate $r_i=r+\epsilon_i$ and precision $\beta_i$. Utilizing the dataset $\demons=\{(i,\demons_i)\}_{i=1}^{N}$, the outer loop of the generalized IRLEED objective is  simplified to:
	\begin{equation}\label{eq: general_IRLEED}
		\max_{r,\epsilon,\beta}\sum_{i=1}^{N}\beta_i\big(\EX_{\demons_i}[r_i(s,a)]-\EX_{\hat{\pi}_i}[r_i(s,a)]\big)-\psi(\beta_i r_i),
	\end{equation}
	where in practice, the unknown functions $r$ and $\epsilon=\{\epsilon_i\}_{i=1}^{N}$ can be parameterized by neural networks. Note that under a constant reward regularizer $\psi$, if we express the rewards as $r_i(s,a)=(\theta+\epsilon_i)^\top\features(s,a)$, then the gradients of the above objective are identical to the ones described in $\cref{cor: main result}$. We note that in addition to the reward regularization used in Eq.~\eqref{eq: general_IRLEED}, we use $\ell_2$-regularization on the output of neural networks $\{\epsilon_i\}_{i=1}^{N}$ to model the assumption that each demonstrator $i$ has a reward $r_i$ that varies slightly from the common reward $r$. One direction for future analysis would be to experiment with other regularization techniques, and see which assumptions are better for different kinds of data.\par 
 
	\section{Implementation Details}\label{Sec: Appendix Implementation Details}
    Below we detail the experimental setups utilized in \cref{sec: Experiments}, and provide information on the computational resources used for our work. Note that we provide the implementation of IRLEED used during the Gridworld experiments, while referencing implementations used for other experiments below.\par
    
    \subsection{Gridworld}\label{Sec: App Gridworld}
    To implement both IRLEED and IRL for the Gridworld experiments, we utilized stochastic value iteration on the reward estimate $\theta$ to infer the policy $\pi$, and Monte Carlo simulations of the MDP to estimate the feature expectations required to optimize $\theta$, repeating both steps until convergence. As stated, when using IRLEED, we only updated $\epsilon$ and $\beta$ after $\theta$ converged, and repeated this for two iterations. For stochastic value iteration, we used a discount factor of $0.9$, a maximum horizon of $100$ timesteps, and a convergence criteria of $1e-4$ on the inferred state action values. For Monte Carlo simulations, we used $100$ episode samples. For optimization, we used stochastic gradient ascent with learning rates of $0.05, 0.1, 0.2$ for $\beta$, $\epsilon$, $\theta$, respectively, and a convergence criteria of $1e-4$ on $\theta$. For IRL, we initialized $\theta=[0.1]^k$, and set $\beta_i=1$ and $\epsilon_i=[0]^k$ to constants for all demonstrators, removing their effect. For evaluation, we used the policy recovered from the final estimate of $\theta$, sampling $100$ episodes to measure the mean reward.\par

    To create datasets for varying levels of \textit{precision} and \textit{accuracy}, we randomly generated $5$ suboptimal policies according to \cref{eq: stochastic policy}, where $\beta_i$ was sampled from a uniform distribution starting at $0$, with mean equal to the \textit{precision} level, and $\epsilon_i$ was sampled from a multivariate normal distribution $\mathcal{N}([0]^k,\mathbf{I}_k/\lambda^2)$, with $\lambda$ equal to the \textit{accuracy} level. For $\beta$, the maximum values used for the uniform distribution were: $0.4,0.5,1,1.5,2,2.5,3,3.5,4,4.5,5$. For $\lambda$, the values used were: $2,2.5,3,3.5,4,4.5,5,5.5,6,10,\infty$, where $\infty$ corresponds to setting $\epsilon_i=[0]^k$ to remove its effect. We tested a total of $121$ dataset settings, collecting $40$ trajectories from each policy to compose our dataset, and comparing the performance of IRLEED and IRL over $100$ seeds for each setting.\par
    
    \subsection{Simulated Control Tasks}\label{Sec: App Control}
    To implement IQ and setup the training and evaluation procedures for the simulated control tasks, we utilized the codebase provided by the authors.\footnote{https://github.com/Div99/IQ-Learn} For all experiments, we utilized the provided hyperparameters for IQ. To implement ILEED, we followed the codebase provided by the authors, implementing the modified behavior cloning loss alongside the IQ implementation.\footnote{https://github.com/Stanford-ILIAD/ILEED} For IRLEED, we built on top of the aforementioned IQ implementation by adding a learnable parameter $\beta_i$, and an additional critic network $\epsilon_i$, for each demonstrator $i$. This way, we sampled suboptimal policies according to: $\pi_i(a|s)\propto\exp(\beta_i(Q(s,a)+\alpha\epsilon_i(s,a))$. All three methods utilized the same architecture for the critic networks, and followed the original hyperparamters used for the IQ implementation. For ILEED, the critic network was used to compute action probabilities using a final softmax layer. For IRLEED and ILEED, $\beta_i$ was initialized to the same inverse temperature parameter used in the IQ implementation, and the learning rate was set to $5e-4$. For IRLEED, the additional critic networks $\epsilon_i(s,a)$ were initialized with the same procedure as the critic network $Q(s,a)$, utilized the same learning rate divided by $10$, a scaling factor of $\alpha=0.01$, and an $\ell_2$-regularization weight of $1e-2$. We trained all methods using $50,000$ and $100,000$ time steps for Cartpole and Lunar Lander, respectively. For evaluation, we utilized the final policy, sampling $300$ episodes to measure the mean reward.\par  

    To create datasets for varying levels of \textit{precision} and \textit{accuracy}, we randomly generated $3$ suboptimal policies according to $\pi_i(a|s)\propto\exp(\beta_i(Q(s,a)+\alpha\epsilon_i(s,a))$, where $Q(s,a)$ was a pretrained critic network provided by the original IQ implementation, and $\epsilon_i(s,a)$ was a randomly initialized critic network. For Cartpole, we utilized $\alpha=0$ and $\alpha=20$ for the high and low \textit{accuracy} settings, respectively. For Lunar Lander, we utilized $\alpha=0$ and $\alpha=0.2$ for the high and low \textit{accuracy} settings, respectively. For the low \textit{precision} setting, $\beta_i$ was sampled from a uniform distribution between $0$ and $1$ for Cartpole, and between $0$ and $100$ for Lunar Lander. For the high \textit{precision} setting, the effect of $\beta$ was removed and the policies were sampled according to the maximum state action values: $\pi_i(a|s)=\max(Q(s,a)+\alpha\epsilon_i(s,a))$. We ran this experiment in both the online and offline setting, comparing the performance of IRLEED, IQ, and ILEED, using $30$ seeds for each dataset setting.\par   

    \subsection{Atari with Human Data}\label{Sec: App Atari}
    As aforementioned, the implementation for the Atari experiments was identical to the one described above in \cref{Sec: App Control}. For training, we utilized $1,000,000$ time steps for all methods. For IRLEED, we did not update the error critic networks $\epsilon_i(s,a)$ or the temperature parameters $\beta_i$ for the first $300,000$ timesteps. For evaluation, we sampled $10$ episodes from the policies recovered during the last $5$ epochs of training, where each epoch corresponded to $5,000$ time steps.\par
    
    The two datasets used are publicly available, where dataset \textit{A}: simulated data using a pretrained expert policy, was provided alongside the original IQ implementation~\cite{garg_iq_learn_2022} and dataset \textit{B}: collected data using adept human players, was part of a larger human study~\cite{kurin2017atari}. For Space Invaders, dataset \textit{A} contained $5$ trajectories with an average return of $1285$, whereas dataset \textit{B} contained $5$ trajectories with an average return of $1801$. For Qbert, dataset \textit{A} contained $10$ trajectories with an average return of $14760$, whereas dataset \textit{B} contained $10$ trajectories with an average return of $17340$. We ran this experiments using the three combinations of these datasets, averaging our results over $5$ seeds.\par

    \subsection{Mujoco}\label{Sec: App Mujoco}
    To train on the continuous control Mujoco environments, we used the SAC implementation of IQ provided by the authors. For IRLEED, we built on top of the aforementioned IQ implementation by adding a learnable parameter $\beta_i$, and appending an additional network $\epsilon_i$ to the actor for each demonstrator $i$, keeping the critic network identical. All three methods utilized the same architectures, and followed the original hyperparamters used for the IQ implementation. For IRLEED, $\beta_i$ was initialized to the same inverse temperature parameter used in the IQ implementation, and the learning rate was set to $1e-5$. The additional actor networks $\epsilon_i(s,a)$ were initialized with the same procedure as the actor network $\pi(s,a)$, utilized the same learning rate divided by $10$, a scaling factor of $\alpha=0.01$, and an $\ell_2$-regularization weight of $1e-2$. We trained all methods using $500,000$ time steps. For evaluation, we sampled $10$ episodes from the policies recovered during training, abd averaged our result over $15$ initializations.\par  

    To create the different datasets consisting of \textit{A}, \textit{B}, and \textit{C}, we utilized pretrained policies. Utilizing the demonstrations provided by the IQ codebase, we ran the IQ algorithm for $1,000,000$ timesteps, $50,000$ timesteps, and $1,000$ timesteps, creating three policies with varying optimality (corresponding to datasets \textit{A}, \textit{B}, and \textit{C}, respectively). When creating the combined datasets, each policy was used to generate $1000$ state-action pairs, some containing more than one trajectory.  

    \subsection{Computational Resources}\label{Sec: App Compute}
    All of the experiments were performed on an shared cluster containing the 20C/40T Intel Xeon Silver 4114 CPU, 64GB RAM, and 4×GTX-1080 GPUs.  

\end{document}